\DeclareMathOperator*{\argmin}{argmin}
\DeclareMathOperator*{\argmax}{argmax}
\title{Generalized Logit Adjustment: Calibrating Fine-tuned Models by Removing Label Bias in Foundation Models}
\author{
 \textbf{Beier Zhu}\textsuperscript{1} \quad 
 \textbf{Kaihua Tang}\textsuperscript{1} \quad
 \textbf{Qianru Sun}\textsuperscript{2} \quad
 \textbf{Hanwang Zhang}\textsuperscript{1} \\
\small 
\textsuperscript{1}Nanyang Technological University\quad \quad 
\textsuperscript{2}Singapore Management University \\
\tt \small 
beier002@e.ntu.edu.sg, hanwangzhang@ntu.edu.sg}
\begin{document}

\newcommand{\defeq}{\vcentcolon=}
\newcommand{\eqdef}{=\vcentcolon}

\newtheorem{definition}{Definition}
\newtheorem{theorem}{Theorem}
\newtheorem{assumption}{Assumption}
\newtheorem{lemma}{Lemma}
\newtheorem{proposition}{Proposition}
\newtheorem{corollary}{Corollary}

\def\eg{\emph{e.g.}} 
\def\Eg{\emph{E.g}}
\def\ie{\emph{i.e.}} 
\def\Ie{\emph{I.e}}
\def\cf{\emph{cf} } 
\def\Cf{\emph{Cf}}
\def\etc{\emph{etc}} 
\def\vs{\emph{vs}}
\def\wrt{w.r.t. } 
\def\dof{d.o.f}
\def\iid{i.i.d} 
\def\wolog{w.l.o.g}
\def\etal{\emph{et al}}

\newcommand{\ours}{GLA }
\newcommand{\fgla}{f_\mathsf{gla}}
\newcommand{\ferm}{f_{\mathsf{ft}}}
\newcommand{\fzs}{f_{\mathsf{zs}}}
\newcommand{\fla}{f_{\mathsf{la}}}
\newcommand{\fens}{f_{\mathsf{ens}}}
\newcommand{\Ev}{\Phi_{\mathsf{v}}}
\newcommand{\Et}{\Phi_{\mathsf{t}}}
\newcommand{\softmax}{\mathrm{softmax}}
\newcommand{\x}{\mathbf{x}}
\newcommand{\e}{\mathbf{e}}
\newcommand{\z}{\mathbf{z}}
\newcommand{\q}{\mathbf{q}}
\newcommand{\R}{\mathcal{R}}

\definecolor{tabhighlight}{HTML}{e5e5e5}

\newcommand{\tableCellHeight}{1}
\newcommand{\tabstyle}[1]{
  \setlength{\tabcolsep}{#1}
  \renewcommand{\arraystretch}{\tableCellHeight}
  \centering
  \small
}

\newenvironment{customitemize}[1]{%
    \begin{list}{\labelitemi}{%
        \setlength{\leftmargin}{#1} % Set the left margin/indentation
    }
}{%
    \end{list}
}

\newtheoremstyle{restatedlemma}
  {\topsep}       % Space above
  {\topsep}       % Space below
  {\itshape}      % Body font
  {}              % Indent amount
  {\bfseries}     % Theorem head font
  {.}             % Punctuation after theorem head
  {.5em}          % Space after theorem head
  {\thmname{#1} \thmnumber{#2} (\thmnote{#3})} % Theorem head spec (can be left empty, meaning ‘normal’)

\newtheoremstyle{restatedproposition}
  {\topsep}       % Space above
  {\topsep}       % Space below
  {\itshape}      % Body font
  {}              % Indent amount
  {\bfseries}     % Theorem head font
  {.}             % Punctuation after theorem head
  {.5em}          % Space after theorem head
  {\thmname{#1} \thmnumber{#2} (\thmnote{#3})} % Theorem head spec (can be left empty, meaning ‘normal’)

\theoremstyle{restatedlemma}
\newtheorem*{restatedlemma}{Restated Lemma}

\theoremstyle{restatedproposition}
\newtheorem*{restatedproposition}{Restated Proposition}

\maketitle

\begin{abstract}

Foundation models like CLIP allow zero-shot transfer on various tasks without additional training data. Yet, the zero-shot performance is less competitive than a fully supervised one. Thus, to enhance the performance, fine-tuning and ensembling are also commonly adopted to better fit the downstream tasks. However, we argue that such prior work has overlooked the inherent biases in foundation models. Due to the highly imbalanced Web-scale training set, these foundation models are inevitably skewed toward frequent semantics, and thus the subsequent fine-tuning or ensembling is still biased. In this study, we systematically examine the biases in foundation models and demonstrate the efficacy of our proposed Generalized Logit Adjustment (GLA) method. Note that bias estimation in foundation models is challenging, as most pre-train data cannot be explicitly accessed like in traditional long-tailed classification tasks.
To this end, GLA has an optimization-based bias estimation approach for debiasing foundation models. As our work resolves a fundamental flaw in the pre-training, the proposed GLA demonstrates significant improvements across a diverse range of tasks: it achieves 1.5 pp accuracy gains on ImageNet, a large average improvement (1.4-4.6 pp) on 11 few-shot datasets, 2.4 pp gains on long-tailed classification.
Codes are in \url{https://github.com/BeierZhu/GLA}.

\end{abstract}
\section{Introduction}\label{sec:intro}

Thanks to the Web-scale data and self-supervised strategies, foundation models like CLIP~\cite{radford2021learning} empower zero-shot transfer to a wide variety of domains~\cite{brown2020language,alayrac2022flamingo,yu2022coca}. 
However, the zero-shot performance is still weak on several domain-specific tasks such as differentiating models of cars, species of flowers, and variants of aircraft~\cite{radford2021learning,brown2020language}.
Therefore, it is a common practice to improve the downstream performance via supervised fine-tuning on labeled data, \eg, linear probing, prompt tuning~\cite{COOP,prograd}, and end-to-end fine-tuning.

However, fine-tuned models are easily biased: they are adept in exploiting spurious correlations that only hold on the downstream distribution~\cite{radford2021learning,pham2023combined,wortsman2022robust,prograd}. 
To improve the robustness, 
several studies~\cite{wortsman2022robust,prograd,zhu2023debiased} propose to combine fine-tuned models with zero-shot models. 
For example,  WiSE-FT~\cite{wortsman2022robust} ensembles the fine-tuned and zero-shot models in weight space and ProGrad~\cite{prograd} uses zero-shot predictions to regularize the fine-tuning gradient. 
The underlying assumption lies in that the zero-shot models are robust to distribution shifts~\cite{radford2021learning}, and their predictions are complementary to those of fine-tuned models~\cite{wortsman2022robust}.

Despite these methods exhibiting performance gains on both in-distribution and out-of-distribution evaluations, they all overlook the inherent bias originating from the foundation models. 
Specifically, the Web-scale data for pre-training foundation models exhibit a highly skewed distribution due to Zipf’s law of nature~\cite{reed2001pareto}. 
% Figure~\ref{fig:figure1}(a) gives a concrete example, we analyze the label frequency of all ImageNet~\cite{deng2009imagenet} classes from a vision-language pre-training dataset LAION-400M~\cite{schuhmann2021laion400m}, which reveals a highly skewed long-tailed distribution. 
The resulting foundation models develop a biased decision boundary that leads to a poor zero-shot performance on rare classes. As evidenced in Figure~\ref{fig:figure1}(a) and (b), the purple line encounters a dramatic drop, and the zero-shot performance of tail classes is significantly lower than that of head classes ($57.2\%$ \vs. $78.0\%$). 
Existing ensemble methods like WiSE-FT~\cite{wortsman2022robust} overlook the label bias, resulting in an improvement in top-1 ($+0.5\%$) and head accuracy ($+1.7\%$) while a noticeable degradation on the tail performances ($-0.9\%$) in Figure~\ref{fig:figure1}(b). Another evidence is that the orange line (WiSE-FT) is below the blue line (fine-tuned models) for rare classes in Figure~\ref{fig:figure1}(a).

\begin{figure}[t]
    \centering
    \includegraphics[width=1.\textwidth]{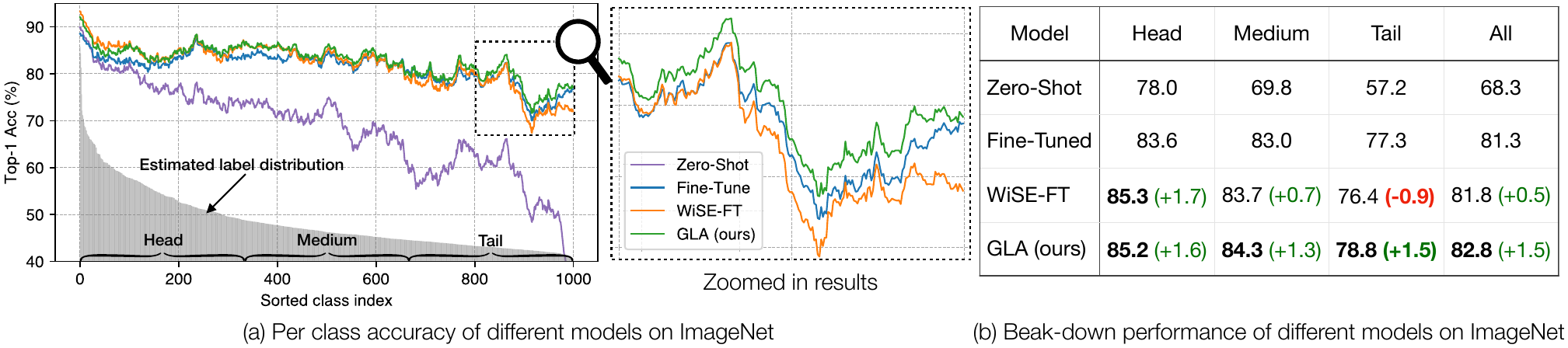}
    \caption{(a) Per class accuracy of CLIP-ViT/B16 on ImageNet. Class index are sorted using the estimated pre-training label prior. Curves are smoothed for better visualization. (b) Beak-down performance of different models on ImageNet. We equally divide the ImageNet classes into three subgroups, according to the class index. Existing ensemble methods like WiSE-FT~\cite{wortsman2022robust} exhibits a clear performance loss on tail classes, while our \ours stands out for all three subgroups.
    }
    \label{fig:figure1}
\end{figure}

We propose Generalized Logit Adjustment (GLA), a simple post-hoc method consisting of two steps: \textbf{1)} removing the label bias of zero-shot model via estimating the label distribution in the pre-training dataset; \textbf{2)} ensembling the fine-tuned and debiased zero-shot models. As illustrated in Figure~\ref{fig:figure1}~(b), our GLA achieves consistent improvement across all three subgroups, particularly showing a significant gain on tail classes ($+1.5\%$).
Despite its simplicity, our \ours has a firm statistical grounding: it is the Bayes optimal classifier given the fine-tuned and zero-shot models, thus consistent for minimizing the error on a class-balanced target distribution~(Section~\ref{sec:glabetter}). 
It is worth noting that removing the bias of foundation models is challenging since the label distribution is often inaccessible due to privacy or copyright concerns.
In this work, we only use the downstream labeled data and the zero-shot model to estimate the foundation label bias. Specifically, we formulate the problem by adjusting the margin of the zero-shot models such that the lowest error is achieved on the downstream dataset. 
This grounding translates into strong empirical performance on real-world datasets, covering few-shot, many-shot, and long-tail learning (Section~\ref{sec:exp}).

The contributions and novelties of this work are summarized as follows:
\begin{customitemize}{1cm}
    \item We point out the overlooked label bias in foundation models, which originates from the skewness of pre-training distribution that affects the performance of downstream tasks. 
    \item We formalize the estimation of the label bias as a constrained optimization problem~(Section~\ref{sec:framework}) with theoretical justification (Section~\ref{sec:jusifyPi}). The entire process does not require access to the pre-training dataset, making it practical for fine-tuning scenarios. We present Generalized Logit Adjustment (GLA) method, which ensembles the debiased zero-shot and fine-tuned models, and demonstrate its superiority over conventional fine-tuning and ensembling by proving it's a Bayes optimal classifier~(Section~\ref{sec:glabetter}).
    \item We build a comprehensive benchmark for evaluation, which considers three real-world settings and three fine-tuning paradigms. The settings are:
    1) many-shot learning with abundant data 2) few-shot learning; and 3) long-tail classification, representing a more challenging scenario that combines many-shot and few-shot data (Section~\ref{sec:exp}). The three fine-tuning paradigms include: 1) end-to-end fine-tuning; 2) linear probing; and 3) prompt tuning (Section~\ref{sec:setup}).
    \item We demonstrate the efficacy of our proposed method \ours by conducting extensive experiments across various settings and fine-tuning paradigms. We observe 1 to 1.5 pp accuracy gains on ImageNet, large averaged improvement (1.4 to 4.6 pp) on 11 few-shot datasets and 2.4 pp averaged accuracy gains on long-tail datasets. (Section~\ref{sec:exp}).
\end{customitemize}

\section{Related Work}
\noindent\textbf{Image-text foundation models.} 
Foundation models pre-trained by contrastive objectives have set impressive milestones for image and text representation learning, with CLIP~\cite{radford2021learning}, ALIGN~\cite{jia2021scaling}, CoCa~\cite{yu2022coca} and Flamingo~\cite{alayrac2022flamingo} being the exemplars. Such models exhibit impressive prompt-based zero-shot performance on various image recognition downstream tasks. 
Our method aims to reduce foundation model biases to boost performance in downstream tasks. While \cite{allingham2023simple} also addresses word frequency bias, we differ in two key areas: Firstly, we debias zero-shot models using fixed prompts, whereas \cite{allingham2023simple} refines the prompting process. Secondly, our GLA doesn't require access to a subset of the pre-training data.

\noindent\textbf{Ensembles.} Ensemble methods aim to boost performances by combining multiple networks, which can be either implemented by aggregating model outputs~\cite{dietterich2000ensemble,bauer1999empirical,lakshminarayanan2017simple,freund1997decision,pmlr-v180-kumar22a,yi2023invariant}, weight-space ensembling~\cite{wortsman2022robust,izmailov2018averaging}, or ensemble distillation~\cite{hinton2015distilling,NEURIPS2020_18df51b9}. For the adaptation of foundation models, several work propose to ensemble the fine-tuned and zero-shot models for better performance: Wortsman et al.~\cite{wortsman2022robust} ensembles them in weight space; 
% Zhang et al.~\cite{zhang2021tip} creates the classifier by combining weights of zero-shot classifier with training samples for few-shot adaptation.
ProGrad and ProReg~\cite{prograd,zhu2023debiased} propose to fuse them via knowledge distillation.
Our \ours is orthogonal to these approaches, as it concentrates on mitigating the biases in foundation models that are detrimental to ensemble models.

\noindent\textbf{Logit adjustment.} Logit adjustment~\cite{menonlong,tang2020long,kangdecoupling,wu2021adversarial,hong2021disentangling} is a post-hoc technique to adjust the biased output of classification networks. Kang~\etal~\cite{kangdecoupling} proposes an element-wise scaling adjustment for the classifier weight. Tang~\etal~\cite{tang2020long} removes the projection of features on a global biased direction. Menon~\cite{menonlong} derives the theoretically optimal adjustment from the training distribution. Unlike the those approaches which rely on a transparent training data or class distribution, our \ours can eliminate the class bias without accessing to the pre-training statistics.

\section{Methods}
\subsection{Setup}\label{sec:setup}
\noindent\textbf{Task.} Consider a classification problem with instances $\x \in \mathcal{X}$ and labels $y\in \mathcal{Y}=[K]=\{1,...,K\}$. 
We have a zero-shot model $\fzs$ (given below), a downstream dataset $\mathcal{D}_s=\{\x_i, y_i\}_{i=1}^{N_s}$ drawn from source distribution $P_s$ and a fine-tuned model $\ferm$ (given below) trained on the dataset $\mathcal{D}_s$.
Give a model $f: \mathcal{X} \rightarrow \mathbb{R}^K$ that outputs prediction score, we define the risk of $f$ on the target distribution $P_t$ as the mis-classification rate: $\R_t(f)=\mathbb{E}_{\x, y\sim P_t}[y\neq \argmax_i f(\x)_i]$.
Our goal is to learn a model $\fgla$ that best leverages $\fzs$ and $\ferm$ that minimizes the risk $\R_t$.

\noindent\textbf{Zero-shot models.} We primarily explore CLIP~\cite{radford2021learning} for zero-shot models. CLIP consists of a visual encoder $\Ev(\mathbf{x})$ and a text encoder $\Et(\mathbf{t})$, producing $l_2$-normalized features from an image $\mathbf{x}$ and a text $\mathbf{t}$ respectively. 
Zero-shot model $\fzs$ for $K$ classes is enabled by matching image features  $\mathbf{v}=\Ev(\mathbf{x})$ with classification weights $\mathbf{w}_k=\Et(\mathbf{t}_k)$, where
$\mathbf{t}_k$ is obtained by extending the class name $\{c_k\}$ to a pre-defined prompt, \eg, ``a photo of a \{$c_k$\}.''. 
Additional details are provided in Appendix~\ref{app:zeroshot}.
The probability of $\mathbf{x}$ being classified as $y$ is defined as:
\begin{equation}
P(y|\mathbf{x})=\softmax(\fzs(\x))_y=\frac{\exp(\mathbf{v}^T\mathbf{w}_y)}{\sum_{k=1}^K\exp(\mathbf{v}^T\mathbf{w}_k)}.
\end{equation}
\noindent\textbf{Fine-tuned models.} Standard fine-tuning initializes the model $\ferm$ with the pre-trained parameters and then solve $\ferm=\argmin_{f} \R_s(f)$ to minimize the  risk on downstream dataset. We consider three common variants of fine-tuning: (1) end-to-end, where all parameters of $\Ev$ and $\mathbf{w}_k$ are updated; (2) linear probing, where only $\mathbf{w}_k$ is modified while  $\Ev$ is fixed; (3) prompt tuning, where the text input $\mathbf{t}_k$ is learned, while keeping $\Ev$ and $\Et$ freezed. See Appendix~\ref{app:ft} for details on fine-tuning methods.

\noindent\textbf{Notation.} Let $P_p(y)$, $P_s(y)$ and $P_t(y)$ be the marginal probability of class $y\in [K]$ for pre-training, source (training) and target (test) distribution, respectively.
Let $\pi_p$ and $\pi_s$ denote the log probabilities of  class for pre-training and training distribution, \ie, $\pi_p(y)=\log P_p(y)$ and  $\pi_s(y)=\log P_s(y)$.

\subsection{Generalized Logit Adjustment Framework}\label{sec:framework}
Fine-tuned models often yield significant gains compared to zero-shot models, and ensembling them can further improve performance. This leads to a natural question:
How should we best leverage the zero-shot and fine-tuned models for the prediction tasks? We attempt to answer by proposing generalized logit adjustment in Definition~\ref{def:gla}.

\begin{definition}\label{def:gla}
(\textbf{GLA})  The Generalized Logit Adjustment (GLA) model $\fgla$ is defined as follows:
\begin{equation}\label{eq:fgla}
    \fgla(\x)=\ferm(\x) + \fzs(\x) - \pi_s - \pi_p.
\end{equation}
\end{definition}
In Section~\ref{sec:glabetter}, we prove that given the zero-shot and fine-tuned models, our \ours model is the \textit{Bayes} optimal classifier and no other combination of the two models can outperform it. Here remains one important question: how could we obtain $\pi_p$ as we have no access to the pre-training statistics? We provide the estimation process of $\pi_p$ in Eq~\eqref{eq:lagrange} and postpone the justification in Section~\ref{sec:jusifyPi}.
The entire GLA algorithm consists of two steps which is given as follows:  

\begin{figure}[t]
    \centering
    \includegraphics[width=1.0\textwidth]{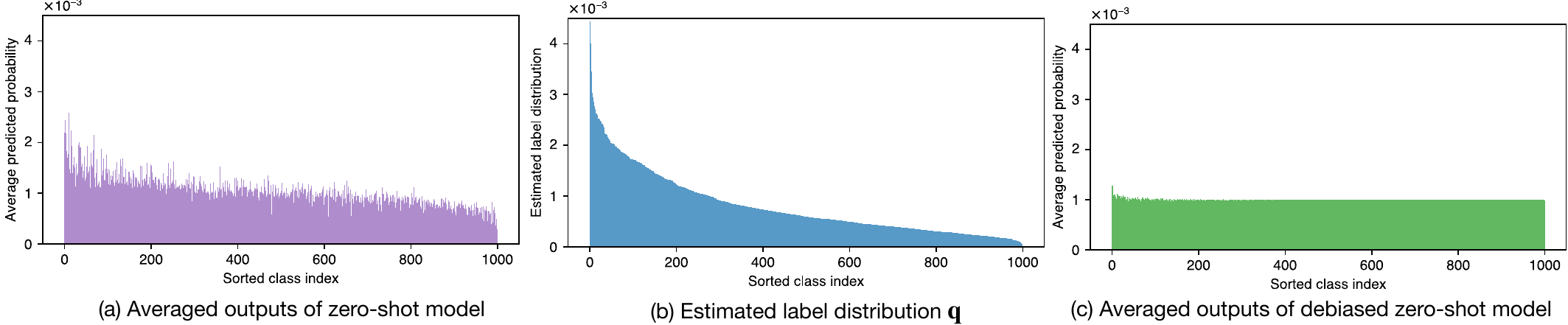}
    \caption{Illustration of debiasing process on ImageNet validation set. (a) The original distribution of zero-shot outputs; (b) the estimated pre-train distribution $\mathbf{q}$ based on our algorithm; (c) the distribution of debiased zero-shot outputs using estimated $\mathbf{q}$. }
    \label{fig:method}
\end{figure}

\noindent\textbf{Step 1: Estimation of $\pi_p$.} Let $\q$ be an arbitrary \textit{probability simplex} over $K$ classes. 
Given the validation data from $P_t$ or the balanced training data, we can estimate the $\hat{\pi}_p=\log \q^*$ as the constrained optimization problem (proof in Section~\ref{sec:jusifyPi}):
\begin{align}\label{eq:optimization}
    &\q^*=\argmin_{\q} \R_t(\fzs - \log \q) \notag \\
    &\mathrm{s.t.}\ \q_i \geq 0, \mathrm{for}\ i \in [K], \notag \\
    &\quad  \sum_{i\in [K]} \q_i = 1.
\end{align}
We constrain the sum of $\q$ to be 1 and ensure that each element is non-negative, guaranteeing that it forms a valid probability distribution. We solve the following Lagrangian problem to find optimal $\q^*$:
\begin{equation}\label{eq:lagrange}
    \min_{\q}\max_{\lambda_i \geq 0, \upsilon} \R_t(\fzs - \log \q) - \sum_i \lambda_i \q_i + \upsilon (1 - \sum_{i\in [K]} \q_i) 
\end{equation}

\noindent\textbf{Step 2: GLA ensembling.} Given the estimated $\hat{\pi}_p$ and the known downstream training $\pi_s$, we ensemble the zero-shot model $\fzs$ and the fine-tuned model $\ferm$ to get our \ours model $\fgla$ via Eq.~\eqref{eq:fgla}.  We can regard $\fzs-\hat{\pi}_p$ and $\ferm-\pi_s$ as the debiased zero-shot model (Figure~\ref{fig:method}(c)) and debiased fine-tuned models, respectively. Our GLA is actually ensembling two debiased models.  Note that, different from \cite{prograd,wortsman2022robust}, we \textit{do not} require a hyper-parameter to adjust the contribution of the two models, the optimal solution is to combine them equally~(see Section~\ref{sec:glabetter} for justification).
\section{Theoretical Analysis}\label{sec:justify}
In this section, we explain why our \ours model best ensembles the zero-shot and fine-tuned models (Section~\ref{sec:glabetter}) and justify the estimation process of the pre-training label distribution $\pi_p$ (Section~\ref{sec:jusifyPi}). We start with some preliminaries on \textit{Bayes} optimal classifier.
\subsection{Preliminaries}
% \noindent\textbf{Bayes Optimal Classifier.}
Suppose we have a pair $(X,Y)\sim P$ takes values in $\mathcal{X} \times \mathcal{Y}$, where $Y$ is the class label of input $X$. 
\begin{definition}
    The 0-1 error (risk) of a classifier $\hat{y}: \mathcal{X} \rightarrow \mathcal{Y}$ on distribution $P$ is given by:
    \begin{equation}
        \R(\hat{y})=P(Y\neq \hat{y}(X))
    \end{equation}
\end{definition}
However, the 0-1 error is non-smooth, one typically minimizes a surrogate loss $\ell$, \eg, cross-entropy: 
% \begin{equation}
%     \ell(f(\x),y)=-\log \frac{\exp(f(\x)_y)}{\sum_{i\in[K]}\exp(f(\x)_i)}=\log [\sum_{i\in[K]}\exp(f(\x)_i-f(\x)_y)],
% \end{equation}
$\ell(f(\x),y)=\log [\sum_{i\in[K]}\exp(f(\x)_i-f(\x)_y)]$
, where $\hat{y}(\x)=\argmax_i f(\x)_i$. 
It is known that the cross-entropy loss is \textit{Bayes consistent}~\cite{zhang2004statistical}, \ie, a nearly optimal minimizer of the cross-entropy loss ($\mathbb{E}_{\x, y\sim P}[\ell(f(\x),y)$] is also a nearly optimal optimizer of the mis-classification error ($\mathbb{E}_{\x, y\sim P}[y\neq \hat{y}(\x)]$).

\begin{definition}\label{def:bayes}
    The Bayes optimal classifier $y^*$ for $P$ given input $\x$ is defined as:
    \begin{equation}
        y^*(\x)= \argmax_{y\in \mathcal{Y}} P(y|\x)
    \end{equation}
\end{definition}
It is called \textit{Bayes} optimal classifier because on the average \textit{no} other classifier using the same hypothesis and prior knowledge can outperform it.

\begin{lemma}\label{the:bayes}
    The Bayes optimal classifier $y^*$ for $P$ has lower risk than all classifiers $\hat{y}: \mathcal{X} \rightarrow \mathcal{Y}$.
    \begin{equation}
        \R(y^*)\leq \R(\hat{y})
    \end{equation}
\end{lemma}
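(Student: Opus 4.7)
The plan is to reduce the global risk comparison to a pointwise comparison using the tower property of expectation, and then invoke the definition of the Bayes classifier to argue that $y^*$ wins pointwise.

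First I would rewrite the risk as a conditional expectation: for any classifier $\hat{y}$,
\begin{equation}
\R(\hat{y}) \;=\; \mathbb{P}(Y \neq \hat{y}(X)) \;=\; \mathbb{E}_X\bigl[\,\mathbb{P}(Y \neq \hat{y}(X)\mid X)\,\bigr] \;=\; \mathbb{E}_X\bigl[\,1 - \mathbb{P}(Y = \hat{y}(X)\mid X)\,\bigr].
\end{equation}
The same identity holds for $y^*$. So it suffices to show that for almost every $\x \in \mathcal{X}$,
\begin{equation}
\mathbb{P}(Y = y^*(\x)\mid X = \x) \;\geq\; \mathbb{P}(Y = \hat{y}(\x)\mid X = \x),
\end{equation}
since integrating this pointwise inequality against the marginal of $X$ gives $\R(y^*) \leq \R(\hat{y})$.

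The pointwise inequality is immediate from Definition~\ref{def:bayes}: by construction $y^*(\x) \in \argmax_{y \in \mathcal{Y}} \mathbb{P}(y\mid \x)$, so $\mathbb{P}(y^*(\x)\mid \x) \geq \mathbb{P}(y\mid \x)$ for every $y \in \mathcal{Y}$, and in particular for $y = \hat{y}(\x)$. Taking expectations and using the rewrite above yields the claim.

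There is essentially no difficult step here; the result is the textbook Bayes optimality statement. The only delicate points worth flagging are measurability of $y^*$ (which is standard given a finite label set and conditional distributions defined up to null sets) and the fact that ties in the $\argmax$ are broken arbitrarily, both of which affect $y^*$ only on a set that does not alter the risk. I would therefore keep the proof short, present the two displays above, and remark that the argument uses only the tower property plus the definition of $y^*$, without any assumption on the hypothesis class of $\hat{y}$.
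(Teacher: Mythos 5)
Your proof is correct: the paper itself states Lemma~\ref{the:bayes} without proof, treating it as the textbook Bayes-optimality fact, and your argument (conditioning on $X$ via the tower property and then applying the pointwise $\argmax$ property from Definition~\ref{def:bayes}) is exactly the standard derivation one would supply. Nothing is missing; the measurability and tie-breaking caveats you flag are appropriately dismissed for a finite label set.
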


\subsection{Generalized Logit Adjustment Leads to Better Ensembling}\label{sec:glabetter}
\noindent\textbf{Zero-shot and fine-tuned models are complementary.} 
We revisit an empirical phenomena observed in \cite{wortsman2022robust} Section 5.1: After exploring a series of measures of diversity, covering predictions and features, they find that zero-shot and fine-tuned models have diverse predictions, despite sharing the same backbone. 
As the two data distribution $P_s$ and $P_p$ is known to be different, the resulting models leverage different cues to predict:  fine-tuned models risk exploiting \textit{spurious correlations and in-domain patterns} which only hold for downstream dataset~\cite{taori2020measuring,arjovsky2020out};
On the other hand, zero-shot CLIP models capture \textit{stable correlations} across diverse domains and exhibit much higher robustness~\cite{radford2021learning}. 
For instance, zero-shot models rely on robust features for decisions that can achieve high performance on sketch and adversarial samples, while the fine-tuned models that trained on real images typically fail on these samples, as they rely on spurious correlations that only hold on real images.
We formulate the phenomena in the following assumption.
\begin{assumption}\label{ass1}
Zero-shot and fine-tuned models have diverse predictions:
\begin{equation}
    (\ferm(\x) \perp \fzs(\x)) | y . 
\end{equation}
\end{assumption}
We derive the conditional probability $P_t(y|\ferm(\x), \fzs(\x))$ \wrt the outputs of $\fzs(\x)$ and $\ferm(\x)$: 
\begin{lemma}\label{lemma1}
For a balanced target distribution\footnote{This lemma can be easily extended to imbalanced target distributions (proof in Appendix~\ref{app:prove}). Yet, as most test sets are class-balanced, we focus on the balanced case for brevity.}, where $P_t(y)=1/K$ for all $y\in [K]$, we have:
\begin{equation}\label{eq:condP}
    P_t(y|\ferm(\x), \fzs(\x))= \softmax (\underbrace{\ferm(\x) + \fzs(\x) - \pi_s - \pi_p}_{\textstyle{\fgla(\x)}})(y)
\end{equation}
\end{lemma}
Intuitively, since the zero-shot and fine-tuned models provide diverse predictions, conditioned on the two predictions is equivalent to adding the logits in log space. Additionally, as the target distribution is class-balanced, we need to remove the class bias of two models by subtracting $\pi_s$ and $\pi_p$. The formal proof is given in Appendix~\ref{provelemma1}. 
Note that the RHS of Eq.~\eqref{eq:condP} is exactly the softmax output of our \ours model by Definition~\ref{def:gla}, which exhibits the following property:
\begin{proposition}\label{the:fgla}
 Let $g: \mathbb{R}^K \times \mathbb{R}^K\rightarrow \mathbb{R}^K$ be an arbitrary function that ensemble the outputs of $\fzs$ and $\ferm$. 
Our GLA classifier $\fgla$ has lower risk than any function $f_g(\x)=g(\fzs(\x),\ferm(\x))$, \ie
\begin{equation}\label{eq:gla}
\R_t(\fgla) \leq \R_t(f_g).
\end{equation}
\end{proposition}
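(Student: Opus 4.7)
The plan is to recognize Proposition~\ref{the:fgla} as a direct consequence of the Bayes optimality framework (Lemma~\ref{the:bayes}) applied to the ``compressed'' input space $(\fzs(\x), \fzs(\x)) \in \mathbb{R}^K \times \mathbb{R}^K$. In other words, any ensemble $f_g$ that takes only the two logit vectors as input is a classifier living on this joint space, and I would argue $\fgla$ realizes the Bayes optimal decision rule on that very space.

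First I would note that for any measurable $g$, the derived predictor $\hat{y}_g(\x) = \argmax_i f_g(\x)_i$ depends on $\x$ only through the pair $(\fzs(\x), \ferm(\x))$. Hence, by the tower property of conditional expectation, its risk $\R_t(f_g)$ is lower-bounded by the risk of the Bayes optimal classifier on the joint variable $(\fzs(\x), \ferm(\x))$, i.e.\ the rule $\x \mapsto \argmax_y \Pt(y \mid \fzs(\x), \ferm(\x))$. This is the content of Lemma~\ref{the:bayes} transplanted to the space of sufficient statistics formed by the two logit outputs.

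Next I would invoke Lemma~\ref{lemma1}, which already computes this conditional probability explicitly under Assumption~\ref{ass1} and balanced target distribution, yielding $\Pt(y \mid \fzs(\x), \ferm(\x)) = \softmax(\fgla(\x))_y$. Since $\softmax$ is strictly monotone in each coordinate (it preserves the $\argmax$), the Bayes optimal prediction on the joint variable is precisely $\argmax_y \fgla(\x)_y$. Combining the previous two steps gives $\R_t(\fgla) \leq \R_t(f_g)$ for every ensemble function $g$, which is exactly Eq.~\eqref{eq:gla}.

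The main obstacle I anticipate is the reduction step: one must justify carefully that restricting attention to classifiers of the form $f_g(\x) = g(\fzs(\x), \ferm(\x))$ is equivalent to classifying on the induced joint random variable, so that Lemma~\ref{the:bayes} can be applied on that reduced space rather than on $\mathcal{X}$ directly. Once this measurability/sufficiency argument is clean, the rest (invoking Lemma~\ref{lemma1} and monotonicity of $\softmax$) is immediate. For completeness I would also remark that the balanced-target assumption is used only to put $\pi_p$ and $\pi_s$ in their particular additive form as in Eq.~\eqref{eq:condP}; the argument extends to imbalanced target distributions by the footnote in Lemma~\ref{lemma1}.
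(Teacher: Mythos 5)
Your proposal is correct and follows essentially the same route as the paper's own proof: identify $\Pt(y\mid \fzs(\x),\ferm(\x))$ via Lemma~\ref{lemma1}, observe that $\softmax$ preserves the $\argmax$ so $\fgla$ realizes the Bayes rule on the pair of logit vectors, and then invoke Lemma~\ref{the:bayes} on that reduced space; your explicit attention to the measurability/sufficiency reduction is a point the paper leaves implicit. (Minor slip: the joint input space in your first paragraph should read $(\fzs(\x),\ferm(\x))$, not $(\fzs(\x),\fzs(\x))$.)
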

\begin{proof}
 From Lemma~\ref{lemma1} and Definition~\ref{def:gla}, we have:
    \begin{equation}
        \argmax_{y\in \mathcal{Y}}\fgla(\x)_y=\argmax_{y\in \mathcal{Y}}\softmax (\ferm(\x) + \fzs(\x) - \pi_s - \pi_p)_y=\argmax_{y\in \mathcal{Y}}P_t(y|\ferm(\x), \fzs(\x)),
    \end{equation}
    which means $\fgla$ is the Bayes optimal classifier (see Definition~\ref{def:bayes}) given $\ferm(\x)$ and $\fzs(\x)$. 
    According to Lemma~\ref{the:bayes}, any other classifier  $g(\ferm(\x), \fzs(\x))$ must have higher risk, \ie, $\R_t(\fgla)\leq \R_t(f_g)$. 
\end{proof}
Proposition~\ref{the:fgla} demonstrates that our $\fgla$ model is the \textit{best} model, as it has the lowest risk on target distribution. 
Proposition~\ref{the:fgla} further explains the superiority of $\fgla$ over the fine-tuned model $\ferm$ and the naive ensemble $\fens(\x) = \ferm(\x) + \fzs(\x)$:
\begin{corollary}\label{cor1}
$\fgla$ performs better than fine-tuned model $\ferm$ and naive emsembling $\fens$:
% \begin{align}  
%     & \R_t(\fgla) \leq \R_t(\ferm), \label{eq:cor1}\\
%     & \R_t(\fgla) \leq \R_t(\fens)  \label{eq:cor2}
% \end{align}
\begin{equation}\label{eq:collary}
     \R_t(\fgla) \leq \R_t(\ferm),\ \R_t(\fgla) \leq \R_t(\fens) 
\end{equation}
\end{corollary}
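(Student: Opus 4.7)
The plan is to derive both inequalities as immediate specializations of Proposition~\ref{the:fgla}, which establishes that $\fgla$ achieves the lowest risk among \emph{all} classifiers of the form $f_g(\x) = g(\fzs(\x), \ferm(\x))$ for an arbitrary ensembling function $g: \mathbb{R}^K \times \mathbb{R}^K \to \mathbb{R}^K$. The entire task therefore reduces to exhibiting $\ferm$ and $\fens$ as two particular instances of such ensembles.

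For the first inequality, I would observe that the fine-tuned model is a trivial ensemble that simply ignores the zero-shot signal. Concretely, define the projection $g_1(a, b) = b$, so that $f_{g_1}(\x) = g_1(\fzs(\x), \ferm(\x)) = \ferm(\x)$. Invoking Proposition~\ref{the:fgla} with this choice of $g$ yields $\R_t(\fgla) \leq \R_t(f_{g_1}) = \R_t(\ferm)$, which is exactly the first half of Eq.~\eqref{eq:collary}.

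For the second inequality, I would choose $g_2(a, b) = a + b$, the pointwise sum in logit space, so that $f_{g_2}(\x) = \fzs(\x) + \ferm(\x) = \fens(\x)$. A second application of Proposition~\ref{the:fgla} gives $\R_t(\fgla) \leq \R_t(f_{g_2}) = \R_t(\fens)$, completing the proof.

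No step is genuinely difficult: the corollary is a one-line consequence of the Bayes-optimality statement, and the only thing one needs to verify is that both $g_1$ and $g_2$ lie in the class of ensembling functions quantified over in Proposition~\ref{the:fgla} — which is clear since that class is unrestricted. The value of stating this as a separate corollary is conceptual: it makes explicit that the Bayes-optimality of $\fgla$ subsumes, as special cases, both the pure fine-tuned baseline (which discards the zero-shot prior) and the naive uniformly-weighted ensemble (which ignores the $\pi_s + \pi_p$ bias correction), thereby justifying the necessity of the debiasing terms in Definition~\ref{def:gla}.
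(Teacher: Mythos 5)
Your proof is correct and matches the paper's intended argument: the paper derives this corollary directly from Proposition~\ref{the:fgla} by treating $\ferm$ and $\fens$ as special cases of the ensembling function $g$, exactly as you do with $g_1(a,b)=b$ and $g_2(a,b)=a+b$.
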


\noindent\textbf{Discussion: when do the GLA models degenerate?} 
Note that there are two equality signs in Eq.~\eqref{eq:collary}, indicating that the performance of the GLA model can degenerate to be equivalent to that of the fine-tuned model and naive ensembling in the following two cases. 

\textbf{Case 1}: For the first equality, if zero-shot model $\fzs(\x)$ provides no further information about $y$ given $\ferm(\x)$, \ie, $(y \perp \fzs(\x) )| \ferm(\x)$, then
$P_t(y|\ferm(\x), \fzs(\x))$ degenerates to $P_t(y|\ferm(\x))$ and the first equality applies. However, in practice, as downstream model and zero-shot model provides diverse predictions, we usually encounter strict inequality, \ie, $\R_t(\fgla) < \R(\ferm)$. 

\textbf{Case 2}: The second equality applies  when pre-training and downstream training distribution are both class-balanced. 
In fact, the pre-training dataset for foundation models are known to be highly skewed. Therefore, in most cases, we have $\R_t(\fgla) < \R_t(\fens)$.

In summary, the above two equalities are usually unattainable, which means that theoretically, our GLA model performs better than both the fine-tuned and the naive ensemble models.

\subsection{Estimate the label bias of the pre-training dataset}\label{sec:jusifyPi}
% To implement $\fgla$, one can train a $\ferm$ model on downstream data, estimate $\pi_s$ by counting the label frequency in downstream training set, and obtain $\fzs$ by zero-shot prompting. 
However, $\pi_p$ is usually unknown as we have no access to pre-training dataset. 
In this work, we seek to estimate $\pi_p$ using the zero-shot models and the downstream data.
Similar to Proposition~\ref{the:fgla}, we have the following proposition says that $\fzs-\pi_p$ has lower error on target distribution than any other classifiers that use $\fzs$, see Appendix~\ref{prove:fzs} for the full proof. 
\begin{proposition}\label{propo:fzs}
   Let $h: \mathbb{R}^K \rightarrow \mathbb{R}^K$ be an arbitrary function that predicts labels using the outputs of the zero-shot model $\fzs(\x)$. Let the derived classifier be denoted as $f_h(\mathbf{x})=h(\fzs(\mathbf{x}))$.
    The classifier $\fzs - \pi_p$ is better than any $f_h(\x)$: $\R_t(\fzs - \pi_p) \leq \R_t(f_h(\x))$.
\end{proposition}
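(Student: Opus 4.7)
The plan is to show that the classifier $\fzs - \pi_p$ is exactly the Bayes optimal classifier for $\Pt$, and then invoke Lemma~\ref{the:bayes} to conclude that it dominates every other classifier, in particular every $f_h$ of the form $h(\fzs(\x))$. Since $\fzs - \pi_p$ is itself a function of $\fzs(\x)$, the proposition would then follow as a special case of the unrestricted Bayes optimality.

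First I would make the standard logit-adjustment modeling assumptions explicit: (i) the zero-shot model is well-specified with respect to the pre-training distribution, so $\softmax(\fzs(\x))_y = \Pp(y \mid \x)$; (ii) label shift holds between pre-training and target, $\Pp(\x \mid y) = \Pt(\x \mid y)$; and (iii) the target is class-balanced, $\Pt(y) = 1/K$. Combining Bayes rule on both distributions gives
$$\Pt(y \mid \x) \;\propto\; \Pt(y)\,\Pt(\x \mid y) \;=\; \tfrac{1}{K}\,\Pp(\x \mid y) \;\propto\; \frac{\Pp(y \mid \x)}{\Pp(y)},$$
where all proportionalities are in $y$. Taking logarithms yields $\log \Pt(y \mid \x) = \fzs(\x)_y - \pi_p(y) + C(\x)$ for some function $C$ that depends on $\x$ alone.

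Because $C(\x)$ is constant in $y$, it is absorbed by the argmax, so $\argmax_y \Pt(y \mid \x) = \argmax_y [\fzs(\x)_y - \pi_p(y)]$. Thus the prediction rule induced by $\fzs - \pi_p$ coincides with the Bayes optimal classifier $y^*$ of Definition~\ref{def:bayes}. Applying Lemma~\ref{the:bayes} gives $\R_t(\fzs - \pi_p) = \R_t(y^*) \leq \R_t(g)$ for every classifier $g$, and in particular for every $g$ of the form $f_h(\x) = h(\fzs(\x))$, which is the claimed inequality.

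The main obstacle is not technical but conceptual: it lies in formalizing assumption (i), that the zero-shot scores actually encode the pre-training posterior $\Pp(y\mid\x)$. This is a modeling hypothesis about foundation models rather than a theorem, and must be stated carefully for the Bayes-rule chain to be rigorous. A secondary subtlety is assumption (ii), the label-shift hypothesis between pre-training and downstream distributions; without it the class-conditional factor $\Pp(\x\mid y)/\Pt(\x\mid y)$ would fail to cancel and the optimal correction would no longer reduce to $\pi_p$ alone. Once both are in place, the derivation is a direct transfer of Menon et al.'s classical logit-adjustment identity to the CLIP setting, with the crucial twist that the bias to be subtracted is the \emph{inaccessible} pre-training log-prior, which motivates the need for the estimation procedures in Method~1 and Method~2.
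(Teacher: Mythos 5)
Your argument is logically sound and follows the same basic chain as the paper (Bayes rule, label shift, class balance, calibration, then Lemma~\ref{the:bayes}), but it conditions on the wrong object, and this matters. You condition on $\x$ and assume the zero-shot model is well-specified at the input level, $\softmax(\fzs(\x))_y=\Pp(y\mid\x)$, together with label shift in $\x$-space. The paper instead sets $\z=\fzs(\x)$ and works entirely with the distribution of $(\z,y)$: it assumes only that $\Pp(y\mid\z)\propto\exp(\z_y)$ (a calibration statement about the scores, not about the full posterior given the image) and that $\Pt(\z\mid y)=\Pp(\z\mid y)$. The identical Bayes-rule manipulation then gives $\Pt(y\mid\z)=\softmax(\z-\pi_p)_y$, so $\fzs-\pi_p$ is the Bayes optimal classifier \emph{given $\z$}, and Lemma~\ref{the:bayes} yields dominance exactly over the class of classifiers $f_h(\x)=h(\fzs(\x))$ --- which is all the proposition asserts. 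Your stronger premise buys you a stronger conclusion (unrestricted Bayes optimality over all classifiers of $\x$), but that conclusion is actually undesirable here: if $\fzs-\pi_p$ were the unrestricted Bayes classifier on $\Pt$, then no fine-tuned model and no ensemble could improve on it, which would collapse Proposition~\ref{the:fgla} and Corollary~\ref{cor1} to trivial equalities and contradict the paper's empirical premise that $\ferm$ carries complementary information. So while your derivation is not wrong under its hypotheses, the paper's choice to treat $\fzs(\x)$ as a lossy summary and assume calibration only with respect to that summary is both the weaker assumption and the one consistent with the rest of the theory; you should restate your assumption (i) at the level of $\z$ and your label-shift assumption as $\Pt(\z\mid y)=\Pp(\z\mid y)$.
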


 Let $\q$ be an arbitrary probability simplex over $K$ classes, then we have $\R_t(\fzs(\x)-\pi_p)\leq \R_t(\fzs(x)-\log\mathbf{q})$. Therefore, we choose to optimize a \textit{probability simplex} $\q$ over $K$ classes such that the model $\fzs - \log \q$ achieves the minimal empirical risk, as formulated in Eq.~\eqref{eq:optimization} (the Step 1 of GLA algorithm). 
Once we obtain the estimated class prior $\hat{\pi}_p=\log \q$, we can easily implement the GLA model by ensembling $\fgla(\x)=\ferm(\x) + \fzs(\x) - \pi_s - \hat{\pi}_p$ (the Step 2 of GLA algorithm).

\begin{wrapfigure}{r}{0.35\textwidth}
    \centering
    \vspace{-4mm}
    \includegraphics[width=0.35\textwidth]{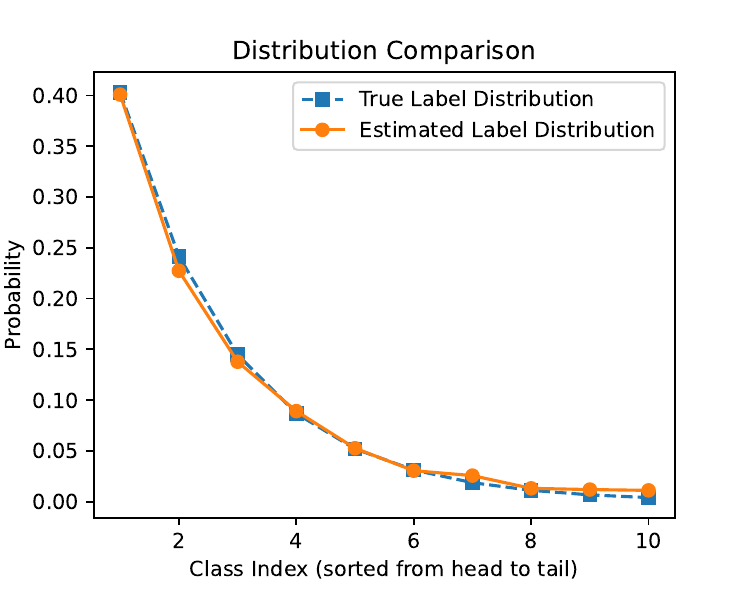}
    \caption{Estimating label bias of CIFAR-10-LT-IB-10.}
    \label{fig:dis_compare}
    \vspace{-4mm}
\end{wrapfigure}
\noindent\textbf{Toy experiment.} We conducted an experiment to show that the estimated label distribution closely approximates the true one. Specifically, we trained a model with a ResNet32 backbone on the imbalanced CIFAR-10-LT~\cite{cui2019class} dataset with an imbalanced ratio of 10. Subsequently, we used only the test set combined with our proposed method to estimate the label distribution. This procedure simulates scenarios where only downstream data is available and the pre-training data is inaccessible. 

Figure~\ref{fig:dis_compare} reveals a strong alignment between the estimated (orange line) and the actual distributions (blue line), which is further emphasized by a small KL-divergence value of 0.00062. The toy experiment validates the effectiveness of our debiasing method.

\noindent\textbf{Discussion.} The $\log \q$ we estimated is not the marginal log-probability of the entire pre-training distribution but the label bias matches the downstream distribution. In the above toy experiment, although training and test sets show different label distributions, their conditional distribution $P(\mathbf{x}|y)$ remains invariant. In this case, our estimate will converge to the actual training label bias.
For CLIP models, with diverse pre-training data, some might not align with the downstream domain, potentially compromising the accuracy of the estimation of the entire pre-training distribution.

However, we'd like to point out that removing the label bias of entire pre-training distribution may not optimal for downstream tasks. As a thought experiment, consider a pre-training dataset "sketch" and "photo" styles for "dog" and "cat" samples. 
Suppose the sample size of "dog'' and "cat'' is equal but there are  more "sketch dogs'' than "sketch cats''.
This means that even if the overall distribution is balanced, each style isn't, resulting in biased zero-shot predictions. 
if we aim to deploy models for the "sketch dogs and cats'' domain, 
adjusting the overall label bias is insufficient.
Instead, the optimal label bias should be estimated on the ``sketch'' distribution. We also provide experiments using LAION-400M dataset in Appendix~\ref{sec:laion}, illustrating the situation when the downstream data diverges from the pre-training set. 
\section{Experiments}\label{sec:exp}
We evaluate our \ours on three real-world scenarios: many-shot (Section~\ref{sec:many_shot_exp}), few-shot (Section~\ref{sec:few-shot_exp}) and long-tail learning (Section~\ref{sec:lt_exp}). We show that our \ours boosts performance on all three settings.

% ************************************************************
% *                    Many Shot Learning                    *
% ************************************************************
\subsection{Many-shot learning}\label{sec:many_shot_exp}
\noindent\textbf{Datasets.} We use ImageNet~\cite{deng2009imagenet} and CIFAR100~\cite{Krizhevsky09learningmultiple} for generic object classification, StanfordCars~\cite{krause20133d} for fine-grained classification, and SUN397~\cite{xiao2010sun} for scene recognition. See Appendix~\ref{app:dataset_details} for details.

\noindent\textbf{Baselines.} We compare \ours against four methods: (1) Zero-shot model, (2) Linear Probing (LP), (3) End-to-End fine-tuning (E2E), and (4) weight ensembling method WiSE-FT\cite{wortsman2022robust}.

\noindent\textbf{Implementation details.} We consider two models: CLIP ViT-B/32 and ViT-B/16. For learning-based models, we fine-tune with AdamW using a cosine annealing learning rate scheduler. We fine-tune for 10 epochs on ImageNet and 20 epochs on other datasets. See Appendix~\ref{app:ft} for further details. 

\noindent\textbf{Main results.} Table~\ref{tab:many_results} compares our \ours with various baselines. We observe that our GLA can increase the performance of end-to-end fine-tuned models: it achieves $1.5\%$ gains on ImageNet. Compared to WiSE-FT,
 \ours gains $1.1\%$ top-1 accuracy boost on ImageNet dataset,. Beyond generic object recognition, our method also improves accuracy on the fine-grained dataset (Stanford Cars) and the scene recognition dataset (SUN397), by $0.4\%$ and $0.5\%$, respectively.

\setlength\intextsep{0pt}
\begin{table}[t]
\centering
\tabstyle{2pt}
    \begin{subtable}[t]{.18\textwidth}
    \centering
    % \scalebox{0.9}{
    \begin{tabular}{lcc}
    \toprule
    Method   &  \texttt{B/32}  &   \texttt{B/16} \\
    \midrule
    Zero-shot & 63.2  &  68.3    \\
    LP        &75.8  &  79.9     \\
    E2E       &76.3  &  81.3  \\
    WiSE-FT  & 76.9  &  81.7  \\
    \rowcolor{tabhighlight}
    \ours    & \textbf{77.9}  &  \textbf{82.8}    \\
    \bottomrule
    \end{tabular}
    % }
    % \vspace{-2mm}
    \caption{ImageNet}
    \end{subtable}
\quad\quad\quad
    \begin{subtable}[t]{.18\textwidth}
    \centering
    % \scalebox{0.9}{
    \begin{tabular}{lcc}
    \toprule
    Method   &  \texttt{B/32}  &   \texttt{B/16} \\
    \midrule
    Zero-shot & 64.2  & 67.2    \\
    LP        & 80.5 & 83.1      \\
    E2E       &  89.4 & 91.0  \\
    WiSE-FT   &  90.0 & 91.3   \\
    \rowcolor{tabhighlight}
    \ours    & \textbf{90.7}  &  \textbf{91.9}    \\
    \bottomrule
    \end{tabular}
    % }
    % \vspace{-2mm}
    \caption{CIFAR100}
    \end{subtable}
\quad\quad\quad
    \begin{subtable}[t]{.18\textwidth}
    \centering
    % \scalebox{0.9}{
    \begin{tabular}{lcc}
    \toprule
    Method   &  \texttt{B/32}  &   \texttt{B/16} \\
    \midrule
    Zero-shot & 59.7  &  64.4    \\
    LP        & 77.1  &  83.1     \\
    E2E       & 83.7  &  90.2  \\
    WiSE-FT   & 84.5  &  90.7  \\
    \rowcolor{tabhighlight}
    \ours    & \textbf{85.0}  &  \textbf{91.1}    \\
    \bottomrule
    \end{tabular}
    % }
    % \vspace{-2mm}
    \caption{Stanford Cars}
    \end{subtable}
\quad\quad\quad
    \begin{subtable}[t]{.18\textwidth}
    \centering
    % \scalebox{0.9}{
    \begin{tabular}{lcc}
    \toprule
    Method   &  \texttt{B/32}  &   \texttt{B/16} \\
    \midrule
    Zero-shot & 62.2  &  64.8    \\
    LP        & 75.6  &  78.0    \\
    E2E       & 80.0  &  82.4  \\
    WiSE-FT   & 80.8  &  82.9  \\
    \rowcolor{tabhighlight}
    \ours    & \textbf{81.2}  &  \textbf{83.4}    \\
    \bottomrule
    \end{tabular}
    % }
    % \vspace{-2mm}
    \caption{SUN397}
    \end{subtable}
\caption{Accuracy of various methods using CLIP ViT-B/32 and ViT-B/16. LP: linear probe; E2E: end-to-end fine-tuning. Results were obtained using the official implementation from WiSE-FT~\cite{wortsman2022robust}.
}
\label{tab:many_results}
\end{table}

\noindent\textbf{Breakdown performance analysis.} To analyze the impact of pre-training label bias on fine-tuned and ensemble models, we present the breakdown results on ImageNet using CLIP ViT-B/16, as shown in Table~\ref{tab:breakdown}. Specifically, we sort the class index using the estimated $\pi_p$, and assign the top third of the classes as the head classes, the last third as the tail classes, and the remaining classes as the medium classes. Due to the label bias, the zero-shot tail performance is significantly lower than the head one ($57.2\%$ \vs. $78.0\%$). The resulting E2E models are also affected by the bias, with the tail performance being $6.3\%$ lower than the head. Existing ensemble WiSE-FT overlooks the bias, exhibiting noticeable degradation on the tail performances ($-0.9\%$) compared to E2E model,  while our GLA stands out for all three subgroups.

\input{tables/many_shot_ablation}
% \noindent\textbf{Robustness to distribution shifts.} We compare end-to-end fine-tuned model with various ensembling models in Table~\ref{tab:ood}, where WiSE-FT + GLA first combines E2E and zero-shot models in weight-space then subtracts the estimated label distribution. We observe that all ensembling method boosts the accuracy of end-to-end models on distribution shifted datasets by a large margin. While weight-space ensembling surpasses output-space ensembling in terms of out-of-distribution accuracy, it conversely falls short when considering in-domain distribution accuracy.

\noindent\textbf{Estimated $\pi_p$ is transferable across different zero-shot models.} The estimated $\pi_p$ should be transferable across different zero-shot models if they are trained on the same pre-training dataset. To verify this, we employed a CLIP ViT-B/32 based zero-shot model to estimate $\pi_p$, which is subsequently used to debias zero-shot models based on CLIP ViT-B/16 and ViT-L/14. As shown in Table~\ref{tab:transfer}, our debiased models outperform the original zero-shot versions by a clear margin.

\begin{wrapfigure}{r}{0.25\textwidth}
    \centering
    \includegraphics[width=0.25\textwidth]{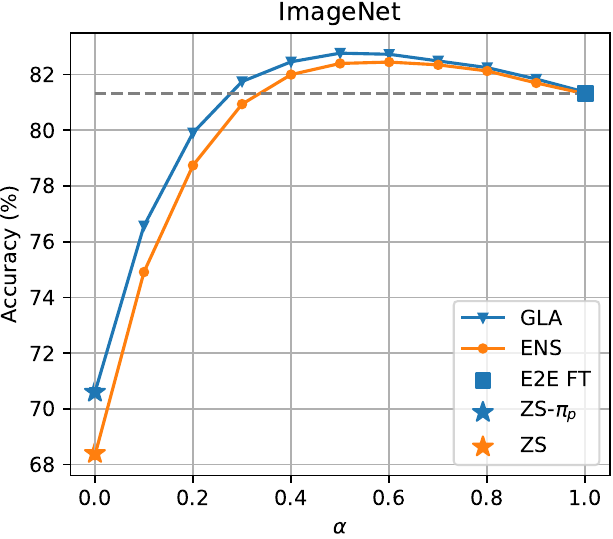}
    \caption{Accuracy with mixing coefficient $\alpha$.}
    \label{fig:ablation1}
\end{wrapfigure}
\noindent\textbf{Ensembling with mixing coefficient.} In Section~\ref{sec:many_shot_exp}, we prove the optimal solution is to combine the debiased zero-shot and fine-tuned models equally. We now examine the claim by introducing a mixture coefficient $\alpha\in [0,1]$. The ensemble predictions are given by: $\fgla(\x, \alpha)=(1-\alpha)\cdot(\fzs(\x)-\pi_p)+\alpha\cdot(\ferm(\x)-\pi_s)$.
We compare the \ours and the naive ensembling with mixture $\alpha$ in Figure~\ref{fig:ablation1}, where \ours meets its optimal performance at $\alpha=0.5$, which is in line with our theoretical analysis. We also observe that the debiased zero-shot model increases accuracy by $2.3\%$ and our \ours consistently outperforms naive ensembling with various $\alpha$.

\subsection{Few-shot learning}\label{sec:few-shot_exp}
For few-shot scenarios, we primarily choose prompt tuning for fine-tuning, since it is empirically more effective than end-to-end fine-tuning and linear probing~\cite{COOP,prograd,PLOT}.
\begin{figure}[t]
    \centering
    \includegraphics[width=1.\textwidth]{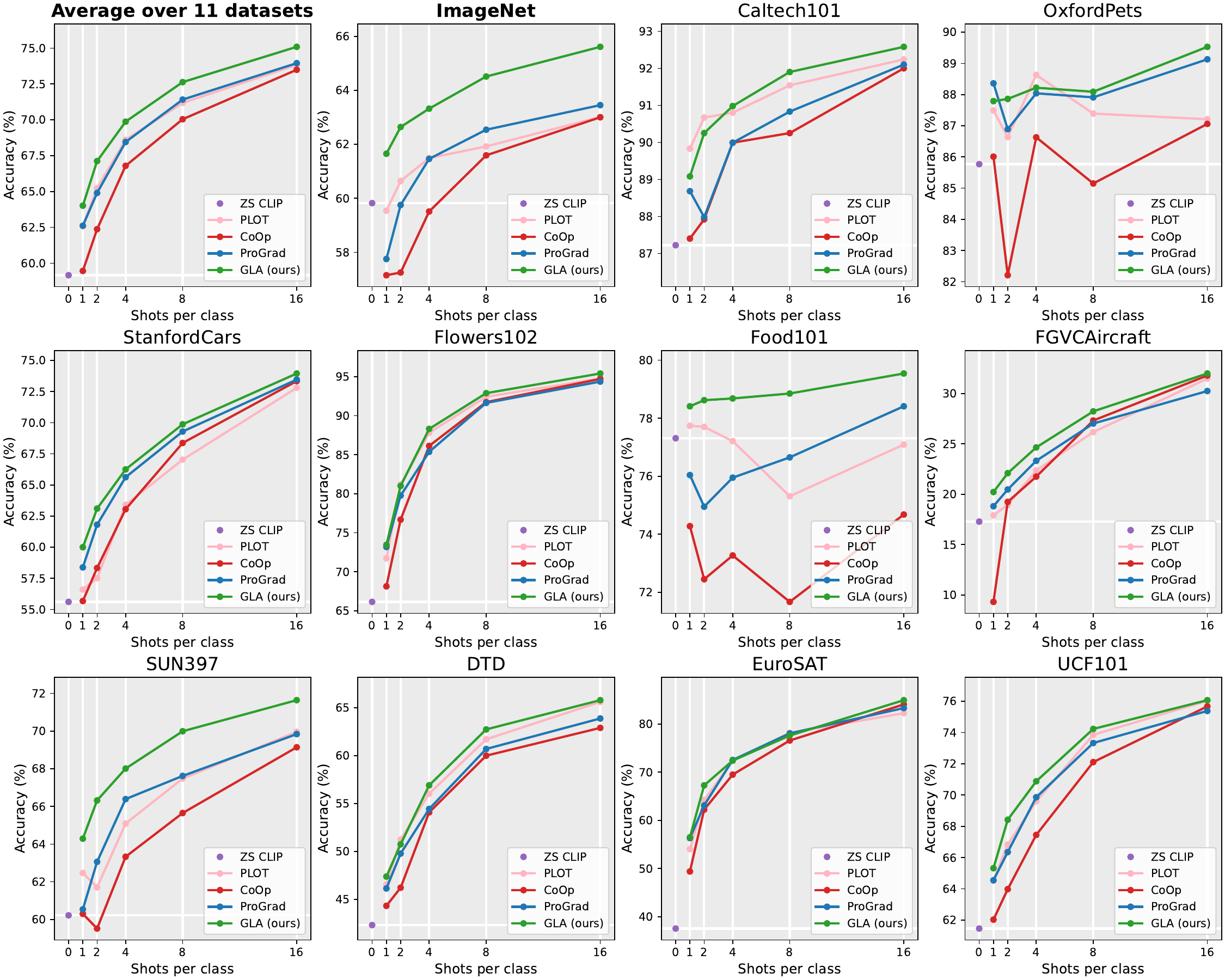}
    \caption{Accuracy (\%) of few-shot learning on 11 datasets. }
    \label{fig:fs_result}
\end{figure}

\noindent\textbf{Datasets.}
We follow CoOp\cite{COOP} to use 15 datasets: ImageNet~\cite{deng2009imagenet}, Caltech101~\cite{fei2004learning}, OxfordPets~\cite{parkhi2012cats}, 
StanfordCars~\cite{krause20133d}, Flowers102~\cite{nilsback2008automated},
Food101~\cite{bossard2014food}, FGVCAircraft~\cite{maji2013fine}, EuroSAT~\cite{helber2019eurosat}, UCF101~\cite{soomro2012ucf101}, DTD~\cite{cimpoi2014describing}, SUN397~\cite{xiao2010sun}, ImageNet-\{V2~\cite{recht2019imagenet}, Sketch~\cite{wang2019learning}, A~\cite{hendrycks2021natural}, R~\cite{hendrycks2021many}\}.
% These datasets covers classification of generic objects, fine-grained categories, scenes, actions, texture, satellite images and evaluation of robustness to domain shifts. 
We randomly select \{1, 2, 4, 8, 16\} shots for training and use the original test set for evaluation. 
See Appendix~\ref{app:dataset_details} for details.

\begin{figure}[t]
\begin{minipage}{0.5\textwidth}
  \centering
    \tabstyle{4pt}
    % \scalebox{0.9}{
    \begin{tabular}{l ccccc}
    \toprule
      \multirow{2}{*}{Method} & Source & \multicolumn{4}{c}{Target} \\ \cmidrule(lr){2-2} \cmidrule(lr){3-6}
     & IN & V2 & S & A & R \\
    \midrule
    CLIP &  59.8 & 52.8 & 35.5 & 22.8 & 60.6 \\
    CoOp &  61.9 & 54.3 & 32.5  & 21.8  & 54.2  \\
    PLOT & 63.0 & 55.1 & 33.0 & 21.9 & 55.6  \\
    ProGrad & 63.5 & 55.4 & 33.1 & 21.3 & 55.2 \\
    \rowcolor{tabhighlight}
    \ours & \textbf{65.6} & \textbf{57.1} & \textbf{36.4} & \textbf{23.0}  & \textbf{62.1} \\
    \bottomrule
    \end{tabular}
    % }
    \captionof{table}{Evaluation on robustness to distribution shift at 16 training shots.}
    \label{tab:robustness}
\end{minipage}
\hfill
\begin{minipage}{0.5\textwidth}
    \centering
    \includegraphics[width=0.8\textwidth]{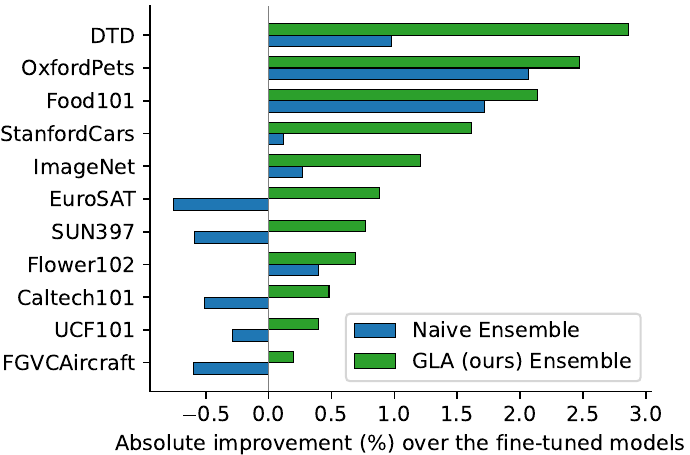}
    \captionof{figure}{Comparison with naive ensembling.}
    \label{fig:gla_vs_ens}
\end{minipage}
\vspace{-4mm}
\end{figure}

\noindent\textbf{Baselines.} We compare with three prompt tuning methods: (1) CoOp~\cite{COOP} optimizes prompts via empirical risk minimization; (2) ProGrad~\cite{prograd} prevents forgetting the general knowledge using zero-shot predictions; (3) PLOT~\cite{PLOT} applies optimal transport to match the vision and text modalities.

\noindent\textbf{Implementation details.}
We implement our proposed \ours method using CLIP-ResNet-50 as the foundation model and adopt class-specific prompt tuning from CoOp. Results are averaged over three seeds, with training configurations aligned with CoOp. See Appendix~\ref{app:pt_details} for further information.

\noindent\textbf{Main results.} Figure~\ref{fig:fs_result} summarizes the few-shot results on 11 datasets. The detailed accuracy and standard deviation are in Appendix~\ref{sec:additional_fsl}. Overall, our \ours clearly outperforms the baselines on average performance by a large margin, \eg, 
our \ours gains $4.6\%$, $4.8\%$, $3.1\%$, $2.6\%$ and $1.6\%$ performance boost over CoOp at $1, 2, 4, 8, 16$ shots. 
In particular, on ImageNet dataset, we observe a large improvement, \eg, $3.9\%, 2.9\%, 1.9\%, 2.0\%$ and $2.2\%$ over ProGrad at $1,2,4,8,16$ shots. Furthermore, on OxfordPerts, Food101 and SUN397 datasets, our method's performance remains stable and consistently improves with increasing sample sizes, while the one of the baseline methods fluctuates significantly. In particular, on Food101, baseline models even underperform zero-shot models by a large margin, while our \ours shows clearly better performance than zero-shot models. 

\noindent\textbf{Robustness to distribution shifts.}
Following CoOp, we used the ImageNet at 16 training shots as the source domain and assess the robustness on
ImageNet-\{V2, Sketch, A, R\} datasets. 
Table~\ref{tab:robustness} summarizes the results, where the prompt tuning baselines perform worse on distribution shifted datasets compared to the zero-shot model, as fine-tuning on limited data misleads the model to learn in-distribution correlations. 
In comparison, our \ours approach makes the best of both fine-tuned and zero-shot models, thus consistently outperforms other methods on both source and target domains.

\noindent\textbf{\ours improves accuracy over naive ensemble.} Figure~\ref{fig:gla_vs_ens} compares the results between our \ours and naive ensembling. We rank the absolute improvements over fine-tuning baseline at 16 training shots. In summary, our \ours demonstrates superior accuracy gains. It is worth noting that the naive ensembling does not always lead to improvements, \eg, on  EuroSAT, SUN, Caltech, UCF and Aircraft, naive ensembling even underperforms fine-tuning baseline.
 \begin{table}[t]
    \tabstyle{3pt}
    \centering
    % \scalebox{0.95}{
    \begin{tabular}{l c ccccccccccc}
    \toprule
    Model  & IN.  & Caltech. & Pets.  & {Cars.} & {Flowers.} & {Food.} & {Aircraft.} & {SUN.} & {DTD} & {EuroSAT} & {UCF.} & {Avg.} \\
    \midrule
    $\fzs$               & 59.8 &  87.1  &	85.8  & 55.6   &  65.3 & 77.9 &	17.1 &	60.3 & 42.3 & 37.5	& 61.5 & 59.1 \\
    $\fzs - \hat{\pi}_p$ & 62.3 & 89.9 & 85.9  &  57.6 & 67.2 &	78.6 & 	18.4 & 63.5 &	43.0 &	39.0 & 62.4 & 60.7 \\
    \rowcolor{tabhighlight}
    $\Delta$ & +2.5 & +2.7 & +0.1 & +2.0 &	+1.9 & +0.7 & +1.3 & +3.2 & +0.7 & +1.4 & +0.9 & +1.6 \\
    \bottomrule
    \end{tabular}
    % }
    \caption{Comparision between zero-shot models ($\fzs$) and debiased zero-shot models ($\fzs - \hat{\pi}_p$).}
    \label{tab:fsldebiasedzs}
\end{table}

\noindent\textbf{Debiased zero-shot models perform better.} We estimate $\pi_p$ at 16 shots and compare the original zero-shot models with the debiased zero-shot models in Table~\ref{tab:fsldebiasedzs}. It is clear that the debiasing leads to improvement on all 11 datasets, showcasing an average accuracy gain of $1.6\%$.
% ************************************************************
% *               Long-tailed Classification                 *
% ************************************************************
\subsection{Long-tail learning}\label{sec:lt_exp} 

\noindent\textbf{Datasets and metrics.} 
We evaluate our method on two standard benchmarks:  Places365-LT and ImageNet-LT~\cite{liu2019large}.
In addition to top-1 accuracy, we report the accuracy on three test subsets according to the number of samples per class: many-shot ($>100$ samples), medium-shot ($20\sim100$ samples), and few-shot ($<20$ samples). Detailed information is provided in Appendix~\ref{app:dataset_details}.

\noindent\textbf{Fine-tuning and long-tail learning baselines.} We compare our \ours with the combinations of fine-tuning protocols and long-tailed recognition methods. We consider three fine-tuning protocols: 1) Linear Probe ({LP}) 2) End-to-End ({E2E}) fine-tuning; 3) Prompt Tuning ({PT}). 
The three fine-tuning paradigms are introduced in Section~\ref{sec:setup} with details in Appendix~\ref{app:ft}.
We compare with 5 long-tail learning methods: 1) standard Empirical Risk Minimization (ERM); 2) Learnable Weight Scaling (LWS)~\cite{kangdecoupling}; 3) Logit Adjustment (LA)~\cite{menonlong}; 4) Balanced Softmax (BS)~\cite{ren2020balanced}, and 5) BALLAD~\cite{ma2021simple}, which is designed for VLMs. See Appendix~\ref{app:lt_methods} for more details on long-tail baselines.

\noindent\textbf{Implementation Details.} For all combinations of the fine-tuning and long-tail learning baselines, visual backbones are initialized from CLIP-ResNet-50 and classifiers are initialized via zero-shot prompting. 
We use SGD for 50 epochs with batch size of 512. See Appendix~\ref{app:lt_methods} for further details.

\noindent\textbf{Results.} 
Table~\ref{tab:LTresult} shows that our \ours method consistently surpasses baselines across all long-tailed datasets. Our approach outperforms {PT}-based models by $3.5\%$ and $4.4\%$ on ImageNet-LT and Places365-LT, respectively. Against {E2E} approaches, \ours exceeds not just the WiSE-FT but also the current SOTA method BALLAD, by a large margin, \eg, 1pp gains on ImageNet-LT. 
\setlength\intextsep{0pt}
\begin{table}[t]
\centering
\tabstyle{5pt}
\begin{subtable}[t]{.45\textwidth}
    \centering
    % \scalebox{0.9}{
    \begin{tabular}{ll|llll}
    \toprule
    & Method          & Many  & Med   & Few   & All   \\ \midrule
    \multirow{4}{*}{\rotatebox{90}{{LP}}}
    &ERM             & 64.4	 & 39.8 & 20.1  & 46.6   \\ 
    &LA              & 56.7	 & 50.5	& 46.4	& 52.3   \\ 
    &BS              & 57.7  & 50.5 & 43.9	& 52.4  \\
    \rowcolor{tabhighlight}
    &GLA	         & \textbf{65.5}  & \textbf{60.8} & \textbf{57.7}  & \textbf{62.2}  \\
    \midrule
    \multirow{4}{*}{\rotatebox{90}{{PT}}}
    &ERM      & 66.6  & 57.7  & 53.8  & 60.6  \\ 
    &LA       & 67.9  & 60.0  & 58.7  & 62.9  \\
    &BS       & 67.0  & 62.2  & 59.4  & 63.5  \\
    \rowcolor{tabhighlight}
    &\ours           & \textbf{70.7}  & \textbf{66.5}  & \textbf{64.3}  & \textbf{67.8}  \\ \midrule
    \multirow{7}{*}{\rotatebox{90}{{E2E}}}
    &ERM             & 75.5  & 56.0	 & 36.2  & 60.8 \\ 
    &LWS~\cite{kangdecoupling}             & 70.4  & 62.6  & 56.2	 & 64.7 \\
    &LA~\cite{menonlong}              & 70.4	 & 62.8  & 56.9  & 64.9\\
    &BS~\cite{ren2020balanced}              & 71.2  & 62.5  & 56.8	 & 65.1 \\
    &WiSE-FT~\cite{wortsman2022robust}         & 70.7	 & 65.0	 & 61.1	 & 66.7 \\
    &BALLAD~\cite{ma2021simple}          & 71.0  & 66.3  & 59.5  & 67.2  \\ 
    \rowcolor{tabhighlight}
    &GLA	         & \textbf{72.1} & \textbf{66.4}  & \textbf{63.4} & \textbf{68.2} \\
   \bottomrule
    \end{tabular}
    % }
    \caption{ImageNet-LT}
\end{subtable}
\quad\quad
\begin{subtable}[t]{.45\textwidth}
    \centering
    % \scalebox{0.9}{
    \begin{tabular}{ll|llll}
    \toprule
    & Method          & Many  & Med   & Few   & All   \\ \midrule
    \multirow{4}{*}{\rotatebox{90}{{LP}}}
    &ERM             & 43.9  & 21.1  & 9.0   & 26.9 \\
    &LA              & 39.2  & 33.7  & 29.5  & 34.9 \\
    &BS              & 39.6  & 34.0  & 28.3  & 34.9 \\
    \rowcolor{tabhighlight}
    &GLA             & \textbf{43.6}  & \textbf{40.2}  & \textbf{38.8}  & \textbf{41.1} \\
    \midrule
    \multirow{4}{*}{\rotatebox{90}{{PT}}}
    & ERM      & 47.7  & 32.6  & 24.9  & 36.5 \\
    & LA       &  43.9  & 40.6  & 39.4  & 41.5 \\
    & BS       & 43.4  & 42.5  & 42.3  & 42.8 \\ 
     \rowcolor{tabhighlight}
    &\ours           & \textbf{47.0}  & \textbf{47.1}  & \textbf{47.8}  & \textbf{47.2} \\ \midrule
    \multirow{7}{*}{\rotatebox{90}{{E2E}}}
    &ERM             &  46.3  & 27.4  & 12.5  & 31.3 \\
    &LWS~\cite{kangdecoupling}             & 41.2  & 39.0  & 32.8  & 38.6 \\ 
    &LA~\cite{menonlong}              & 42.2  & 39.3  & 34.0  & 39.3 \\
    &BS~\cite{ren2020balanced}              &  46.7  & 44.4  & 39.5  & 44.3 \\ 
    &WiSE-FT~\cite{wortsman2022robust}          & 46.0  & 	44.0 & 43.9 & 44.7 \\
    &BALLAD~\cite{ma2021simple}          &  \textbf{48.7}  & 44.7  & 42.2  & 45.7 \\ 
     \rowcolor{tabhighlight}
    &GLA	         & 47.3  &	\textbf{45.4}  &	\textbf{45.3}  &	\textbf{46.1} \\ \bottomrule
    \end{tabular}
    % }
    \caption{Places365-LT}
\end{subtable}
\caption{The performances on ImageNet-LT and Places365-LT.}
\label{tab:LTresult}
\end{table}

\section{Conclusion and Limitation}
In this paper, we identify the label bias in foundation models and underscore its adverse effects on downstream task performance.
We propose the Generalized Logit Adjustment (GLA) framework for fine-tuning foundation models, which boosts the performance by effectively eliminating label bias and combining diverse predictions from zero-shot and fine-tuned models.
We prove that when presented with zero-shot and fine-tuned models,  our \ours is the Bayes optimal classifier for downstream task. 
Extensive experiments across a diverse range of tasks and fine-tuning framework demonstrate the effectiveness of our approach. 
We believe that the proposed GLA may partially improve the fairness and credibility of foundation models. 

The first limitation is that we only focus on the label bias while other forms of model biases, \eg, representation bias~\cite{bommasani2021opportunities}, cannot be addressed by our algorithm yet. The second limitation is that we primarily focus on enhancing the fine-tuning performance for discriminative models. However, applying our GLA framework to generative models presents challenges. For instance, language generation operates as a Markov process, meaning each output depends on previous ones. This implies it's not straightforward to estimate the biasedness of a sequence with our GLA, as we only compute the bias in a pre-defined and independent label space. 
% Addressing this challenge within generative models using our GLA framework is an area we intend to explore in future work.

\newpage

\section*{Acknowledgments}
This research is supported by the National Research Foundation, Singapore
under its AI Singapore Programme (AISG Award No: AISG2-PhD-2021-01-002 and AI Singapore AISG2-RP-2021-022). 

\bibliographystyle{plain}
\bibliography{cite}

\appendix

\section{Proofs }\label{app:prove}

\subsection{Proof of Lemma~\ref{lemma1}}\label{provelemma1}

\begin{restatedlemma}[Lemma~\ref{lemma1}]
Let $\pi_t$ denote the log probability of the target label distribution $\pi_t(y)=\log P_t(y)$, we have:
\begin{equation}\label{alg:gla_complet}
    P_t(y|\ferm(\x), \fzs(\x))= \softmax (\ferm(\x) + \fzs(\x) - \pi_s - \pi_p + \pi_t)(y).
\end{equation}
In class-balanced target distribution, Eq.~\eqref{alg:gla_complet} simplifies to:
\begin{equation}
    P_t(y|\ferm(\x), \fzs(\x))= \softmax (\ferm(\x) + \fzs(\x) - \pi_s - \pi_p)(y).
\end{equation}
\end{restatedlemma}

\begin{proof}
    Denote the output $\e=\ferm(\x)$ and $\z=\fzs(\x)$. We first use the Bayes Rule to decompose $P_t(y|\e,\z)$ into $P_t(\e,\z|y)$, $P_t(y)$ and $P_t(\e,\z)$ in Eq.~\eqref{eq:bayes1}, then rewrite $P_t(\e,\z|y)$ in Eq.~\eqref{eq:ass1} according to Assumption~\ref{ass1}. Focusing on label shift problem~\cite{menonlong, hong2021disentangling,lipton2018detecting} where $P(\x|y)$ does not change, we derive Eq.~\eqref{eq:labelshift}
\begin{align}
    P_t(y|\e,\z)&=\frac{P_t(\e,\z|y)P_t(y)}{P_t(\e,\z)} \label{eq:bayes1}\\
              &= P_t(\e|y)P_t(\z|y) \frac{P_t(y)}{P_t(\e,\z)} \label{eq:ass1} \\
              &= P_s(\e|y)P_p(\z|y) \frac{P_t(y)}{P_t(\e,\z)} \label{eq:labelshift} \\
              &= \frac{P_s(y|\e)P_s(\e)}{P_s(y)} \frac{P_p(y|\z)P_p(\z)}{P_p(y)}\frac{P_t(y)}{P_t(\e,\z)} \\
              &= \frac{P_s(y|\e)}{P_s(y)}\frac{P_p(y|\z)}{P_p(y)}\frac{P_s(\e)P_p(\z)P_t(y)}{P_t(\e,\z)} \\
\end{align}
Since $\e,\z$ are fixed, we can replace the terms that not rely on $y$ with a constant $C_1$ in Eq.~\eqref{eq:c1}. 
We replace $P_s(y)=\exp(\log P_s(y))=\exp(\pi_s(y))$, $P_p(y)=\exp(\log P_p(y))=\exp(\pi_p(y))$ and  $P_t(y)=\exp(\log P_t(y))=\exp(\pi_t(y))$. 
Suppose the underlying class-probabilities $P_s(y|\e) \propto \exp(\e_y)$ and $P_p(y|\z) \propto \exp(\z_y)$ for $y\in [K]$. 
Denote the constants $C_s$ and $C_p$ for normalizing $\exp(\e_y)$ and $\exp(\z_y)$ into probabilities, and merge all constants to $C=\frac{C_1}{C_sC_p}$, we get Eq.~\eqref{eq:c2}
\begin{align}
    P_t(y|\e,\z) &= \frac{P_s(y|\e)}{P_s(y)}\frac{P_p(y|\z)}{P_p(y)}P_t(y)C_1 \label{eq:c1}\\
               &= \exp(\e+\z-\pi_s-\pi_p+\pi_t)(y)\frac{C_1}{C_sC_p} \\
               &=C\cdot \exp(\e+\z-\pi_s-\pi_p+\pi_t)(y)\label{eq:c2} 
\end{align}
Because the summation of $P_t(y|\e,\z)$ is 1, $C=1/\sum_{i\in[K]}\exp(\e+\z-\pi_s-\pi_p+\pi_t)(i)$.
Therefore, we have:
\begin{align}
    P_t(y|\ferm(\x), \fzs(\x))&=P_t(y|\e,\z) \\
                            &=\frac{\exp(\e+\z-\pi_s-\pi_p)_y}{\sum_{i\in[K]}\exp(\e+\z-\pi_s-\pi_p+\pi_t)_i} \\
                            &=\softmax (\ferm(\x) + \fzs(\x) - \pi_s - \pi_p + \pi_t)_y \label{eq:final}
\end{align}
In class-balanced target distribution case,  $\pi_t=\log \frac{1}{K}$ is constant. Since the softmax function is invariant to constant offsets, Eq.~\eqref{eq:final} simplifies to:
\begin{equation}
    P_t(y|\ferm(\x), \fzs(\x))=\softmax (\ferm(\x) + \fzs(\x) - \pi_s - \pi_p)_y
\end{equation}
\end{proof}

\subsection{Proof of Proposition~\ref{propo:fzs}}\label{prove:fzs}

\begin{restatedproposition}[Proposition~\ref{propo:fzs}]
    Suppose that the target distribution $P_p$ is class-balanced. Let $h: \mathbb{R}^K \rightarrow \mathbb{R}^K$ be an arbitrary function that predicts labels using the outputs of the zero-shot model $\fzs(\x)$. Let the derived classifier be denoted as $f_h(\mathbf{x})=h(\fzs(\mathbf{x}))$.
    The classifier $\fzs - \pi_p$ is better than any $f_h(\x)$: $\R_t(\fzs - \pi_p) \leq \R_t(f_h(\x))$.
\end{restatedproposition}

\begin{proof}
    Denote the output $\z=\fzs(\x)$. Similar to Eq.~\eqref{eq:bayes1}-Eq.~\eqref{eq:final}, we have
    \begin{align}
        P_t(y|\z)&=\frac{P_t(\z|y)P_t(y)}{P_t(\z)} \\
                &=\frac{P_p(\z|y)P_t(y)}{P_t(\z)} \\
                &=\frac{P_p(y|\z)}{P_p(y)}\frac{P_t(y)}{P_t(\z)} \\
                &=\exp(\z-\pi_p)(y)/\sum_{i\in [K]}\exp((\z-\pi_p)(i)) \\
                &=\softmax(\z-\pi_p)=\softmax(\fzs(\x)-\pi_p) \label{eq:sec_ine}
    \end{align}
    Therefore, we have:
    \begin{equation}
    \argmax_{y\in\mathcal{Y}}(\fzs(\x)-\pi_p)_y=\argmax_{y\in\mathcal{Y}}\softmax(\fzs(\x)-\pi_p)_y=\argmax_{y\in\mathcal{Y}}P_t(y|\fzs(\x))
\end{equation}
    Again, using Lemma~\ref{the:bayes}, any other classifier $f_h(\x)$ has higher risk than $\fzs(\x) - \pi_p$, \ie, $\R_t(\fzs - \pi_p) \leq \R_t(f_h(\x))$.
\end{proof}

\section{Experimental Details}
\subsection{Dataset details}\label{app:dataset_details}
\noindent\textbf{Many-shot and few-shot datasets.} For many-shot learning, we use ImageNet, CIFAR100, StanfordCars and SUN397 datasets. For few-shot learning, we evaluate models on 15 datasets. The details of each dataset are presented in Table~\ref{tab:detail_dataset}.
\begin{table}[t]
\tabstyle{6pt}
\begin{tabular}{lcccc}
    \toprule
    Dataset                  & Classes  & Train size  & Test size & Task \\ \midrule
    ImageNet & 1,000 & 1.28M & 50,000 & Object-level \\ 
    CIFAR100 & 100   & 50,000 & 10,000 & Object-level \\
    Caltech101 & 100 & 4,128 & 2,465& Object-level \\
    DTD& 47 & 2,820 & 1,692 &  Textures\\ 
    EuroSAT& 10 & 13,500 & 8,100 & Satellite images \\ 
    FGVCAircraft & 100 & 3,334 & 3,333 & Fine-grained aircraft\\
    Flowers102 & 102 & 4,093 & 2,463 & Fine-grained flowers \\ 
    Food101 & 101 & 50,500& 30,300 & Fine-grained food  \\ 
    OxfordPets & 37  & 2,944 & 3,669 & Fine-grained pets \\ 
    StanfordCars & 196 & 6,509 & 8,041 & Fine-grained car \\
    SUN397& 397& 15,880 & 19,850 & Scene-level\\ 
    UCF101& 101 & 7,639 & 3,783 & Action \\
    \midrule
    ImageNetV2 & 1,000 & - & 10,000 & Robustness to collocation  \\
    ImageNet-Sketch & 1000 & - &50,889 & Robustness to sketch domain\\
    ImageNet-A~& 200 & - &7,500 &Robustness to adversarial attack\\
    ImageNet-R& 200 & - &30,000 &Robustness to multi-domains\\
    % Object-Net & 313 & - & 50,000 & Robustness to scenes \\
    \bottomrule
\end{tabular}
\caption{The detailed statistics of datasets for many-shot and few-shot learning.}
\label{tab:detail_dataset}
\end{table}

\noindent\textbf{Long-tail datasets.}
We use two standard long-tail benchmarks: Places365-LT and ImageNet-LT~\cite{liu2019large}.
The skewness of a long-tailed training set is typically represented by imbalanced ratio, which is defined as $N_{\max}/N_{\min}$. 
$N_{\max}$ ($N_{\min}$) denotes the largest (smallest) number of instances per class. A larger imbalanced ratio means a more imbalanced training set. The test sets are divided into three splits: many-shot subset contains classes with $>100$ images, medium-shot subset includes classes with $\geq 20 $ \& $\leq 100$ images, and few-shot subset covers classes with $< 20$ images.
Details are listed in Table~\ref{tab:LTdataset}. 
\input{A-tables/lt_datasets}

\subsection{CLIP zero-shot}\label{app:zeroshot}
We use prompt ensembling of 80 prompts provided by CLIP~\cite{wortsman2022robust} for ImageNet, CIFAR100, and Caltech101 to improve performance, \ie, averaging the text embedding of many captions, \eg., ``a photo of a \{$c_k$\}.'' and ``an image of a \{$c_k$\}.''. 
For OxfordPets, StanfordCars, Flowers102, Food101, FGVCAircraft, EuroSAT, UCF101, DTD and SUN397, we use the pre-defined prompt from CoOp~\cite{COOP}. 

\subsection{Fine-tuned models}\label{app:ft}
\noindent\textbf{End-to-end and linear probe fine-tuning.} We follow WiSE-FT~\cite{wortsman2022robust} to implement fine-tuning. We initialize the classifier with the zero-shot classifier and the output of the image encoder $\Ev$ is normalized during fine-tuning. We fine-tune for a total of 10 epochs using AdamW~\cite{loshchilov2018decoupled} optimizer with default hyper-parameters $\beta_1=0.9, \beta_2=0.999, \epsilon=10^{-8}$ and weight decay $0.1$. We choose a batch size of 512. We use the same data augmentation and cosine-annealing learning rate schedule  as~\cite{wortsman2022robust}.

\subsection{Prompt tuning.}\label{app:pt_details}
Prompt tuning like CoOp~\cite{COOP} automates prompt engineering by learning the prompt given few samples from downstream tasks. CoOp provides two options of prompt design: unified prompt that is shared among all classes and class-specific prompt that is different for each class. 
In this paper, we adopt the class-specific prompt design as the fine-tuned model to implement \ours. 
In specific, given the word embedding  $\mathbf{t}_k^0$ initialized by zero-shot prompts, we aim to learn a collection of class-specific word embedding $\mathbf{R}=\{\mathbf{r}_k\}_{k=1}^K$, such that the text input $\mathbf{t}_k=\mathbf{t}_k^0 + \mathbf{r}_k$ minimizes the empirical risk: $\mathbf{R}^*=\argmin_{\mathbf{R}} \mathbb{E}_{\mathbf{x},y}[y\neq \argmax_i f(x;\mathbf{R})_i]$.

We adhere CoOp to use CLIP ResNet-50 as image encoder for few-shot classification. 
The word embedding $\mathbf{R}$ is initialized from zeros. 
For the $m$ few-shot classification setting (where $m\in \{1,2,4,8,16\}$), we randomly sample $m$ training and $m$ validation points from the respective full datasets.
For all few-shot datasets except ImageNet, the training epoch is set to 200 for 16/8 shots, 100 for 4/2 shots, and 50 for 1 shot. For ImageNet, the epoch is set to 50 for all shots. We fine-tune the prompt with SGD optimizer decayed by the cosine annrealing rule. The base initial learning rate and batch size are set to $10^{-4}$ and $32$. When given an $m$-shot sample setting, we increase the learning rate and batch size by $m$ times simultaneously to accelerate the training speed.

\subsection{Estimation of the class prior}
To estimate the log-probability of the pre-training distribution $\hat{\pi}_s=\log \q$, we utilize the optimization toolkit Cooper~\cite{gallegoPosada2022cooper} from \url{https://github.com/cooper-org/cooper}. $\q$ is initialized as a uniformed distribution, $\q(y)=\frac{1}{K}$ for all $y\in [K]$. We use the standard SGD as the primal and dual optimizers for 2000 steps. 

\subsection{Long-tail learning baselines and training details}\label{app:lt_methods}
 We compared with 5 long-tailed classification methods:
\begin{customitemize}{1cm}
    \item[1.] Standard ERM: We learn the model by standard empirical risk minimization on the long-tailed data.   
    \item[2.] Learnable Weight Scaling (LWS)~\cite{kangdecoupling}: We first learn the model by standard ERM, then fix the model and learn to re-scale the magnitude of the classifier using class-balanced sampling.
    \item[3.] Logit Adjustment (LA)~\cite{menonlong}: We first learn the model by standard ERM, then compensates the long-tailed distribution by subtracting a class-dependent offset to the model outputs.
    \item[4.] Balanced Softmax (BS)~\cite{ren2020balanced} modifies the Softmax cross-entropy loss which explicitly accommodate the label distribution shift during optimization.
    \item[5.] BALLAD~\cite{ma2021simple} first fine-tunes the vision-language models via contrastive loss on long-tailed data, then freezes the backbone and finally employs an adapter to enhance the representations of tail classes with re-sampling strategies.
\end{customitemize}

For all combinations of the fine-tuning baselines and long-tailed learning methods, visual backbones are initialized from CLIP-ResNet-50 and all classifiers are initialized by feeding prompt with class names to the text encoder. We use SGD for all experiments with a momentum of 0.9 for 50 epochs with batch size of 512.
The initial learning rate is set to $1.6\times10^{-3}$ which is decayed by the cosine annealing rule. To mitigate explosive gradients, we use the warmup  learning rate equals to $10^{-5}$ during the first epoch.
For the sake of fairness in comparison, all hyper-parameters of baselines are carefully searched using grid search on the validation set.
\section{Additional Experiments}
% \input{A-tables/transfer}
% \subsection{Estimated label distribution is transferable}\label{sec:transfer}
% The estimated $\hat{\pi}_p$ should be transferable across different zero-shot models if they are trained on the same pre-training dataset. To confirm this, we estimate $\pi_p$ using CLIP ViT-B/32 based zero-shot model, and use it to debias zero-shot models based on CLIP ViT-B/16 and ViT-L/14. Results are shown in Table~\ref{tab:transfer}, where our debiased zero-shot models based on CLIP ViT-B/16 and ViT-L/14 using $\hat{\pi}_p$ estimated from ViT-B/32 show clear performance gains over original zero-shot models.

\subsection{Few-shot learning accuracy}\label{sec:additional_fsl}
We provide mean and standard deviation in Table~\ref{tab:fsl_details} in for \{1, 2, 4, 8, 16\} shots on all 11 few-shot learning datasets.

\begin{table}[ht]
\centering
\tabstyle{5pt}
\label{tab:additional_fewshot}
\begin{tabular}{lccccc}
\toprule
Dataset  & 1 shot & 2 shots & 4 shots & 8 shots & 16 shots \\ \midrule
ImageNet     & 61.65 $\pm$ 0.15 & 62.64 $\pm$ 0.01 & 63.32 $\pm$ 0.07 & 64.51 $\pm$ 0.09 & 65.61 $\pm$ 0.03\\ 
Caltech101   & 89.08 $\pm$ 0.09 & 90.25 $\pm$ 0.25 & 90.98 $\pm$ 0.43 & 91.90 $\pm$ 0.21 & 92.58 $\pm$ 0.42\\
OxfordPets   & 87.79 $\pm$ 0.15 & 87.86 $\pm$ 0.21 & 88.22 $\pm$ 0.21 & 88.09 $\pm$ 0.27 & 89.53 $\pm$ 0.16 \\
StanfordCars & 60.00 $\pm$ 0.14 & 63.10 $\pm$ 0.42 & 66.25 $\pm$ 0.19  & 69.87 $\pm$  0.09 & 73.95 $\pm$ 0.11\\
Flowers102   & 73.45 $\pm$ 0.60 & 81.00 $\pm$ 0.46 & 88.31 $\pm$ 0.65 & 92.89 $\pm$ 0.46 & 95.41 $\pm$ 0.32 \\
Food101      & 78.41 $\pm$ 0.07 & 78.62 $\pm$ 0.07 & 78.68 $\pm$ 0.06 & 78.85 $\pm$ 0.19 & 79.54 $\pm$ 0.47\\
FGVCAircraft & 20.22 $\pm$ 0.59 & 22.09 $\pm$ 0.37 & 24.65 $\pm$ 0.85 & 28.23 $\pm$ 0.44 & 31.99 $\pm$ 0.50 \\
SUN397       & 64.29 $\pm$ 0.19 & 66.32 $\pm$ 0.16 & 68.01 $\pm$ 0.08 & 69.99 $\pm$ 0.18 & 71.64 $\pm$ 0.21\\
DTD          & 47.38 $\pm$ 1.23 & 50.75 $\pm$ 1.46 & 56.90 $\pm$ 0.20 & 62.73 $\pm$ 0.80 & 65.78 $\pm$ 0.49\\
EuroSAT      & 56.50 $\pm$ 1.34 & 67.26 $\pm$ 3.58 & 72.40 $\pm$ 2.43 & 77.59 $\pm$ 1.84 & 84.93 $\pm$ 1.89 \\
UCF101       & 65.32 $\pm$ 0.17 & 68.42 $\pm$ 0.81 & 70.88 $\pm$ 0.50 & 74.23 $\pm$ 0.24 & 76.07 $\pm$ 0.03\\ \bottomrule
\end{tabular}
\caption{\ours Accuracy (\%) with standard deviation of few-shot learning on 11 datasets.}
\label{tab:fsl_details}
\end{table}

\subsection{Experiments on LAION-400M}\label{sec:laion}
To support our thought experiment in the discussion of Section~\ref{sec:jusifyPi}, we use the Open-CLIP ViT-B/16~\cite{ilharco_gabriel_2021_5143773}, the first 20k image in LAION-400M datasets and the bias estimation method proposed by~\cite{allingham2023simple} to estimate the expected logits across 8 classes: ``dog'', ``cat'', ``squirrel'', ``tiger'', ``elephant'', ``horse'', ``pig'' and ``bird''. The bias estimation proposed by~\cite{allingham2023simple} provides a good estimation of $\log P(\mathbf{x})$ over the labels under pre-training distribution.
Our GLA estimates the label bias matches the downstream domain, we consider two downstream domain styles, \ie,  ``photo'' and ``sketch'' from DomainNet~\cite{peng2019moment} dataset. For each domain, we randomly sampled 50 images for each class.

\begin{table}[ht]
\centering
\tabstyle{5pt}
\begin{tabular}{lcccccccc}
\toprule
Method  & dog & cat & squirrel & tiger & elephant & horse & pig & bird \\ \midrule
expected logits by~\cite{allingham2023simple} & 0.059 &	0.039 &	0.043 &	0.053 &	0.043&	0.062	& 0.061 & 	0.028 \\
$\pi_p$ by ``photo'' & 0.059 &	0.041& 0.047& 0.055&0.048 &	0.062	&0.060 & 0.033 \\
$\pi_p$ by ``sketch'' & 0.059 &	0.043& 0.063& 0.061 &	0.067	& 0.042 &	0.047 &	0.056 \\ \bottomrule

\end{tabular}
\caption{Comparison among different bias estimation using Open-CLIP-ViT-B/16.}
\label{tab:domain_net_bias}
\end{table}

\begin{table}[t]
\centering
\tabstyle{5pt}
\begin{tabular}{lcc}
\toprule
Method  & Sketch & Real Photo  \\ \midrule
Zero-shot Open-CLIP & 92.25 & 97.00\\
Debiased by~\cite{allingham2023simple}  & 89.50 &	97.50 \\
Debiased by GLA & 93.00 &	97.75 \\ \bottomrule
\end{tabular}
\caption{Performance on sketch and real photo domain.}
\label{tab:debias_all}
\end{table}
We present the expected logits estimated by~\cite{allingham2023simple}, along with the one calculated from ``photo'' and ``sketch'' downstream domain data in Table~\ref{tab:domain_net_bias}. Since the softmax is invariant to constant offsets, \ie, $\softmax(\mathbf{x}+c)=\softmax(\mathbf{x})$, we align the three label bias to yield the same logits on ``dog'' class by subtracting constant values. We observe that the label bias estimated by the "photo" domain align closely with~\cite{allingham2023simple} due to its close resemblance to the pre-trained domain. Conversely, the "sketch" image style, which significantly differs from the pre-training domain, results in a more pronounced deviation in the pre-trained label bias.

Additionally, we apply the three estimated label biases to debias the zero-shot model and evaluate the classification performance. The results are shown in Table~\ref{tab:debias_all}, where the superiority of our method becomes evident on ``sketch'' domain (93.00\% vs 92.25\%). Applying the label bias from~\cite{allingham2023simple} on the "sketch" domain degrades the model's performance (from 92.25\% to 89.50\%). This is attributed to the overall pre-training label bias does not adequately reflect the bias specific to the``sketch'' domain.

% \newpage
% \bibliographystyle{plain}
% \bibliography{cite}

\end{document}